\newcommand\blfootnote[1]{%
  \begin{NoHyper}%
  \renewcommand\thefootnote{}\footnote{#1}%
  \addtocounter{footnote}{-1}%
  \end{NoHyper}%
}
\newcommand{\methodname}{EquIN}
\begin{document}

\title{Equivariant Representation Learning \\ in the Presence of Stabilizers}
%
%

\author{Luis Armando Pérez Rey (\Letter) *\inst{1,2,3} 
\and
Giovanni Luca Marchetti *\inst{4} \and \\
Danica Kragic\inst{4}\ \and Dmitri Jarnikov \inst{1,3} \and Mike Holenderski\inst{1}}
\authorrunning{L. A. P. Rey, G. L. Marchetti et al.}
%
\institute{Eindhoven University of Technology, Eindhoven, The Netherlands \and
Eindhoven Artificial Intelligence Systems Institute, Eindhoven, The Netherlands
\and
Prosus, Amsterdam, The Netherlands
\and KTH Royal Institute of Technology, Stockholm, Sweden}

\tocauthor{Luis~Armando~Pérez~Rey, Giovanni~Luca~Marchetti,
Danica~Kragic, Dmitri~Jarnikov, Mike~Holenderski}
\toctitle{Equivariant Representation Learning in the Presence of Stabilizers}
\maketitle              

\begin{abstract}
We introduce Equivariant Isomorphic Networks (\methodname) -- a method for learning representations that are equivariant with respect to general group actions over data. Differently from existing equivariant representation learners, \methodname \ is suitable for group actions that are not free, i.e., that stabilize data via nontrivial symmetries. \methodname \ is theoretically grounded in the orbit-stabilizer theorem from group theory. This guarantees that an ideal learner infers isomorphic representations while trained on equivariance alone and thus fully extracts the geometric structure of data. We provide an empirical investigation on image datasets with rotational symmetries and show that taking stabilizers into account improves the quality of the representations.\blfootnote{*Equal Contribution}

\keywords{Representation Learning  \and Equivariance \and Lie Groups}
\end{abstract}

\section{Introduction}

Incorporating data symmetries into deep neural representations defines a fundamental challenge and has been addressed in several recent works \cite{Quessard2020LearningEnvironments, Higgins2022Symmetry-BasedIntelligence, Cohen2014, Tonnaer2022QuantifyingDisentanglement, AhujaPropertiesLearning}. The overall aim is to design representations that preserve symmetries and operate coherently with respect  to them -- a functional property known as \emph{equivariance}. This is because the preservation of symmetries leads to the extraction of geometric and semantic structures in data,  which can be exploited for data efficiency and generalization \cite{Bengio2013}. As an example, the problem of \emph{disentangling} semantic factors of variation in data has been rephrased in terms of equivariant representations \cite{Higgins2018, Caselles-Dupre2019}. As disentanglement is known to be unfeasible with no inductive biases or supervision \cite{locatello2019challenging}, symmetries of data arise as a geometric structure that can provide weak supervision and thus be leveraged in order to disentangle semantic factors.    

\begin{figure}[h!]
  \begin{center}
    \includegraphics[width=.5\linewidth]{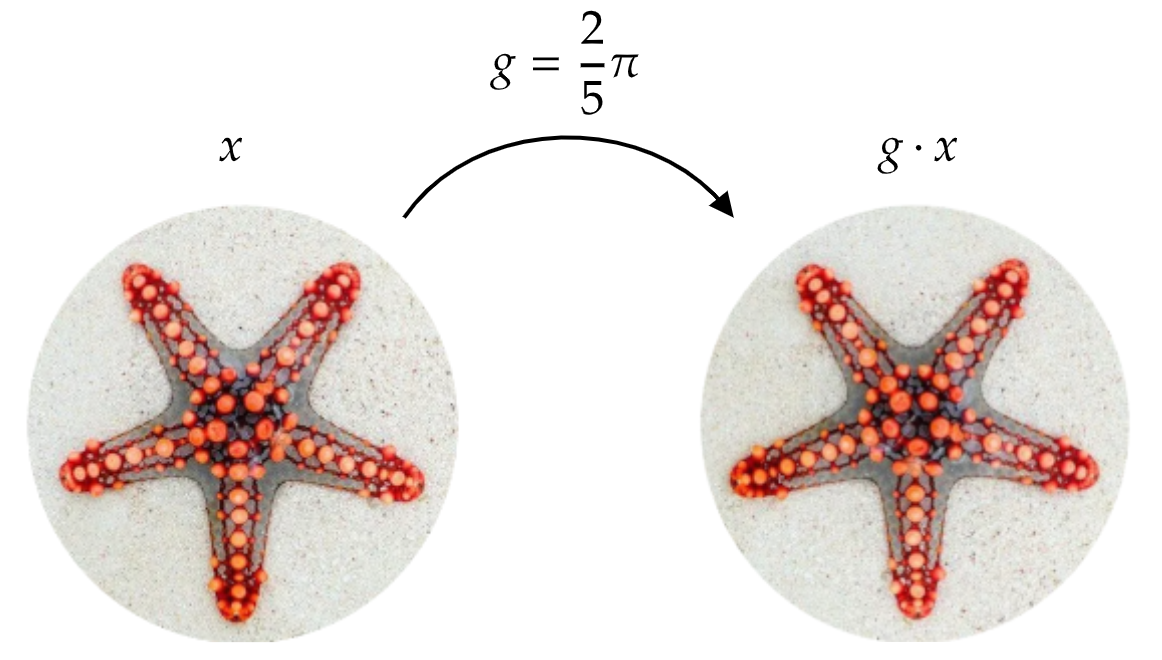}
  \end{center}
 \caption{An example of an action on data that is not free. The datapoint $x$ is stabilized by the symmetry $g \in G$.}\label{nonfreepic}
\end{figure}

The majority of models from the literature rely on the assumption that the group of symmetries acts \emph{freely} on data \cite{Marchetti2022EquivariantDecomposition} i.e., that no datapoint is stabilized by nontrivial symmetries. This avoids the need to model \emph{stabilizers} of datapoints, which are unknown subgroups of the given symmetry group. However, non-free group actions arise in several practical scenarios. This happens, for example, when considering images of objects acted upon by the rotation group via a change of orientation. Such objects may be symmetrical, resulting in rotations leaving the image almost identical and consequently ambiguous in its orientation, see Figure \ref{nonfreepic}. Discerning the correct orientations of an object is important for applications such as pose estimation \cite{Marchand2016} and reinforcement learning \cite{ha2018world}. This motivates the need to design equivariant representation learning frameworks that are capable of modeling stabilizers and therefore suit non-free group actions.

In this work, we propose a method for learning equivariant representation for general and potentially non-free group actions. Based on the Orbit-Stabilizer Theorem from group theory, we design a model that outputs subsets of the group, which represent the stabilizer subgroup up to a symmetry -- a group theoretical construction known as \emph{coset}. The representation learner optimizes an equivariance loss relying on supervision from symmetries alone. This means that we train our model on a dataset consisting of relative symmetries between pairs of datapoints, avoiding the need to know the whole group action over data a priori. From a theoretical perspective, the above-mentioned results from group theory guarantee that an ideal learner infers representations that are isomorphic to the original dataset. This implies that our representations completely preserve the symmetry structure while preventing any loss of information. We name our framework Equivariant Isomorphic Networks -- \methodname \ for short. In summary, our contributions include:

\begin{itemize}
\item A novel equivariant representation learning framework suitable for non-free group actions. 
\item A discussion grounded on group theory with theoretical guarantees for isomorphism representations.
\item An empirical investigation with comparisons to competing equivariant representation learners on image datasets.  
\end{itemize}

We provide Python code implementing our framework together with all the experiments at the following repository: \href{https://github.com/luis-armando-perez-rey/non-free}{\nolinkurl{luis-armando-perez-rey/non-free}}.

\section{Related Work}

In this section, we first briefly survey representation learning methods from the literature leveraging on equivariance. We then draw connections between equivariant representations and world models from reinforcement learning and discuss the role of equivariance in terms of disentangling semantic factors of data. 

{\setlength{\parindent}{0cm}

\textbf{Equivariant Representation Learning}. Several works in the literature have proposed and studied representation learning models that are equivariant with respect to a group of data symmetries. These models are typically trained via a loss encouraging equivariance on a dataset of relative symmetries between datapoints. What distinguishes the models is the choice of the latent space and of the group action over the latter. Euclidean latent spaces with linear or affine actions have been explored in \cite{guo2019affine, worrall2017interpretable, Quessard2020LearningEnvironments}. However, the intrinsic data manifold is non-Euclidean in general, leading to representations that are non-isomorphic and that do not preserve the geometric structure of the data. To amend this, a number of works have proposed to design latent spaces that are isomorphic to disjoint copies of the symmetry group \cite{hinton2011transforming, homeomorphic, Tonnaer2022QuantifyingDisentanglement, Marchetti2022EquivariantDecomposition}. When the group action is free, this leads to isomorphic representations and thus completely recovers the geometric structure of the data \cite{Marchetti2022EquivariantDecomposition}. However, the proposed latent spaces are unsuitable for non-free actions. Since they do not admit stabilizers, no equivariant map exists, and the model is thus unable to learn a suitable representation. In the present work, we extend this line of research by designing a latent space that enables learning equivariant representations in the presence of stabilizers. Our model implicitly represents stabilizer subgroups and leads to isomorphic representations for arbitrary group actions. 

\textbf{Latent World Models}. Analogously to group actions, Markov Decision Processes (MDPs) from reinforcement learning and control theory involve a, possibly stochastic, interaction with an environment. This draws connections between MDPs and symmetries since the latter can be thought of as transformations and, thus, as a form of interaction. The core difference is that in an MDP, no algebraic structure, such as a group composition, is assumed on the set of interactions. In the context of MDPs, a representation that is equivariant with respect to the agent's actions is referred to as latent \emph{World Model} \cite{ha2018world, kipf2019contrastive, park2022learning} or \emph{Markov Decision Process Homomorphism} (MDPH) \cite{van2020plannable}. In an MDPH the latent action is learned together with the representation by an additional model operating on the latent space. Although this makes MDPHs more general than group-equivariant models, the resulting representation is unstructured and uninterpretable. The additional assumptions of equivariant representations translate instead into the preservation of the geometric structure of data.  

\textbf{Disentanglement}. As outlined in \cite{Bengio2013}, a desirable property for representations is disentanglement, i.e., the ability to decompose in the representations the semantic factors of variations that explain the data. Although a number of methods have been proposed for this purpose \cite{higgins2017beta, chen2018isolating}, it has been shown that disentanglement is mathematically unachievable in an unbiased and unsupervised way \cite{locatello2019challenging}. As an alternative, the notion has been rephrased in terms of symmetry and equivariance \cite{Higgins2018}. It follows that isomorphic equivariant representations are guaranteed to be disentangled in this sense \cite{Tonnaer2022QuantifyingDisentanglement, Marchetti2022EquivariantDecomposition}. Since we aim for general equivariant representations that are isomorphic, our proposed method achieves disentanglement as a by-product. 
}

\section{Group Theory Background}
We review the fundamental group theory concepts necessary to formalize our representation learning framework. For a complete treatment, we refer to \cite{rotman2012introduction}. 
\begin{definition}\label{groupdef}
A group is a set $G$ equipped with a \emph{composition map} $G \times G \rightarrow G$ denoted by $(g,h) \mapsto gh$, an \emph{inversion map} $G \rightarrow G$ denoted by $g \mapsto g^{-1}$, and a distinguished \emph{identity element} $1 \in G$ such that for all $g, h, k \in G$:

\begin{center}
\begin{tabular}{ccc}
\emph{Associativity} &   \emph{Inversion} & \emph{Identity}  \\
 $g(hk) = (gh)k$ & \hspace{.2cm} $g^{-1}g = g g^{-1} = 1$  \hspace{.2cm} & $g1 = 1g = g$ 
\end{tabular}
\end{center}
\end{definition}

Elements of a group represent abstract symmetries. Spaces with a group of symmetries $G$ are said to be acted upon by $G$ in the following sense. 

\begin{definition}\label{actiondef}
An action by a group $G$ on a set $\mathcal{X}$ is a map $G \times \mathcal{X} \rightarrow \mathcal{X}$ denoted by $(g,x) \mapsto g \cdot x$, satisfying for all $g,h \in G, \ x \in \mathcal{X}$: 

\begin{center}
\begin{tabular}{ccc}
\emph{Associativity} & \hspace{1cm} & \emph{Identity} \\
$g\cdot (h \cdot x) = (gh) \cdot x$ & \hspace{1cm} &  $1 \cdot x = x$
 \end{tabular}
  \end{center}
\end{definition}

Suppose that $G$ acts on a set $\mathcal{X}$. The action defines a set of \emph{orbits} $\mathcal{X} / G$ given by the equivalence classes of the relation $x \sim y$ iff $y = g \cdot x$ for some $g \in G$. For each $x \in \mathcal{X}$, the \emph{stabilizer} subgroup is defined as 
\begin{equation}
 G_x = \{ g \in G \ | \ g \cdot x = x\}.
 \end{equation}
 Stabilizers of elements in the same orbit are conjugate, meaning that for each $x,y$ belonging to the same orbit $O$ there exists $h \in G$ such that $G_y = hG_x h^{-1}$. By abuse of notation, we refer to the conjugacy class $G_O$ of stabilizers for $O \in \mathcal{X} / G$. The action is said to be \emph{free} if all the stabilizers are trivial, i.e., $G_O = \{1 \}$ for every $O$.

We now recall the central notion for our representation learning framework. 
\begin{definition}
A map $\varphi: \ \mathcal{X} \rightarrow \mathcal{Z}$ between sets acted upon by $G$ is \emph{equivariant} if $\varphi(g\cdot x) = g \cdot \varphi(x)$ for every $x \in \mathcal{X}$ and $g \in G$. An equivariant bijection is referred to as \emph{isomorphism}.
\end{definition}
Intuitively, an equivariant map between $\mathcal{X}$ and $\mathcal{Z}$ preserves their corresponding symmetries. The following is the fundamental result on group actions \cite{rotman2012introduction}.
\begin{theorem}[Orbit-Stabilizer]\label{thm:orbstab}
The following holds:
\begin{itemize}
\item Each orbit $O$ is isomorphic to the set of (left) \emph{cosets} $G / G_O = \{ gG_O \ | \  g \in G \}$. In other words, there is an isomorphism: 
\begin{equation}\label{orbitstab}
 \mathcal{X} \simeq \coprod_{O \in \mathcal{X} / G } G / G_O \quad \subseteq 2^G \times \mathcal{X} / G
\end{equation}
where $2^G$ denotes the power-set of $G$ on which $G$ acts by left multiplication i.e., $g \cdot A = \{g a \ | \ a \in A \}$. 

\item Any equivariant map 
\begin{equation}
\varphi: \ \mathcal{X} \rightarrow \coprod_{O \in \mathcal{X} / G } G / G_O
\end{equation}
that induces a bijection on orbits is an isomorphism. 
\end{itemize}
\end{theorem}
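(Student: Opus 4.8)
The plan is to prove the two bullets in turn, obtaining the second from the first together with elementary facts about equivariant maps between single orbits. For the first bullet I would establish the orbit--stabilizer bijection one orbit at a time: fixing a basepoint $x \in O$ and setting $G_O = G_x$, define $\psi_O : G/G_O \to O$ by $gG_O \mapsto g\cdot x$. The substance of this step is four routine checks --- that $\psi_O$ is well defined ($gG_O = hG_O$ gives $h^{-1}g \in G_x$, hence $g\cdot x = h\cdot x$), surjective (every point of $O$ has the form $g\cdot x$ by definition of an orbit), injective ($g\cdot x = h\cdot x$ gives $h^{-1}g \in G_x = G_O$), and equivariant (immediate from $h\cdot(g\cdot x) = (hg)\cdot x$). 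Each $\psi_O$ is thus an isomorphism, and assembling them over $O \in \mathcal{X}/G$ using $\mathcal{X} = \coprod_O O$ produces \eqref{orbitstab}; the displayed inclusion into $2^G \times \mathcal{X}/G$ merely records each coset as a subset of $G$ tagged by its orbit.

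For the second bullet, $\varphi$ is equivariant by hypothesis, so it only remains to prove bijectivity. The key structural observation is that an equivariant map sends each orbit onto a single orbit, since the image of a transitive $G$-set is transitive; hence $\varphi$ restricts to equivariant maps between orbits, the induced map on orbits being the assumed bijection $\beta$ of $\mathcal{X}/G$. It therefore suffices to show that each restriction $\varphi|_O : O \to \beta(O)$ is bijective. Surjectivity is free, as $\varphi(O)$ is a nonempty $G$-invariant subset of a transitive target orbit. For injectivity, note that $G_x \subseteq G_{\varphi(x)}$ always holds ($g\cdot x = x$ forces $g\cdot\varphi(x)=\varphi(x)$), while conversely injectivity of $\varphi|_O$ is equivalent to the reverse inclusion $G_{\varphi(x)} \subseteq G_x$; so the whole problem collapses to the equality of stabilizers $G_x = G_{\varphi(x)}$.

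This equality is the main obstacle. Via the Part~1 identification, $\varphi|_O$ is an equivariant surjection $G/G_O \to G/G_{\beta(O)}$, whose mere existence forces $G_O$ to be contained in a conjugate of $G_{\beta(O)}$; write this subconjugacy relation as $G_O \preceq G_{\beta(O)}$. To upgrade it to genuine conjugacy --- which, by comparing the sizes of the two coset spaces, is exactly what makes the surjection injective --- I would exploit that $\beta$ is a bijection: following the $\beta$-orbit of $O$ yields a chain $G_O \preceq G_{\beta(O)} \preceq G_{\beta^2(O)} \preceq \cdots$ that returns to $G_O$, and antisymmetry of $\preceq$ then forces $G_O$ and $G_{\beta(O)}$ to be conjugate. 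It is precisely here that the ambient hypotheses enter: the chain closes up whenever $\mathcal{X}/G$ is finite (as for any finite dataset), and antisymmetry of $\preceq$ holds by comparing subgroup orders when $G$ is finite, or by comparing dimensions and numbers of connected components of the closed stabilizers for the compact Lie groups relevant to our setting. Everything else reduces to the single-orbit bijection of Part~1 and the fact that equivariant images of orbits are orbits.
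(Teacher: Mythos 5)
Your proof is correct, and it is worth noting that the paper offers no proof of its own for this theorem: the first bullet is the classical orbit--stabilizer theorem cited from a standard reference, while the second bullet is an addendum by the authors stated without argument. Your treatment of the first bullet is exactly the textbook argument (well-definedness, injectivity, surjectivity, and equivariance of $gG_O \mapsto g\cdot x$, assembled over orbits), so there is nothing to compare there. Your handling of the second bullet is where your proposal genuinely adds value: you correctly observe that equivariance plus a bijection on orbits only yields, a priori, that $G_O$ is \emph{subconjugate} to $G_{\beta(O)}$, and that upgrading this to conjugacy --- which is what injectivity on each orbit amounts to --- requires closing the chain $G_O \preceq G_{\beta(O)} \preceq \cdots$ and invoking antisymmetry of subconjugacy. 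Neither step is free in general: for $G = \mathbb{Q}$ acting on $\mathcal{X} = \coprod_{n \in \mathbb{Z}} \mathbb{Q}/2^{-n}\mathbb{Z}$, the shift sending the $n$-th component to the $(n+1)$-th via the quotient map $\mathbb{Q}/2^{-n}\mathbb{Z} \to \mathbb{Q}/2^{-(n+1)}\mathbb{Z}$ is equivariant and induces a bijection on orbits but is two-to-one; and even with a single orbit, a group containing a subgroup $H$ with $a^{-1}Ha \subsetneq H$ admits an equivariant non-injective self-map of $G/H$. So the second bullet is false as literally stated, and your finiteness caveats (finitely many orbits, or at least finite $\beta$-orbits, together with finite stabilizers or closed stabilizers in a compact Lie group) are precisely the implicit hypotheses of the paper's setting that rescue it. This is a sharper and more honest account than the paper's, and the only suggestion I would make is to state those hypotheses up front as assumptions of the theorem rather than discovering them mid-proof.
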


Theorem \ref{thm:orbstab} describes arbitrary group actions completely and asserts that orbit-preserving equivariant maps are isomorphisms. Our central idea is to leverage on this in order to design a representation learner that is guaranteed to be isomorphic when trained on equivariance alone. 

\begin{figure}[h!]
  \begin{center}
    \includegraphics[width=.5\linewidth]{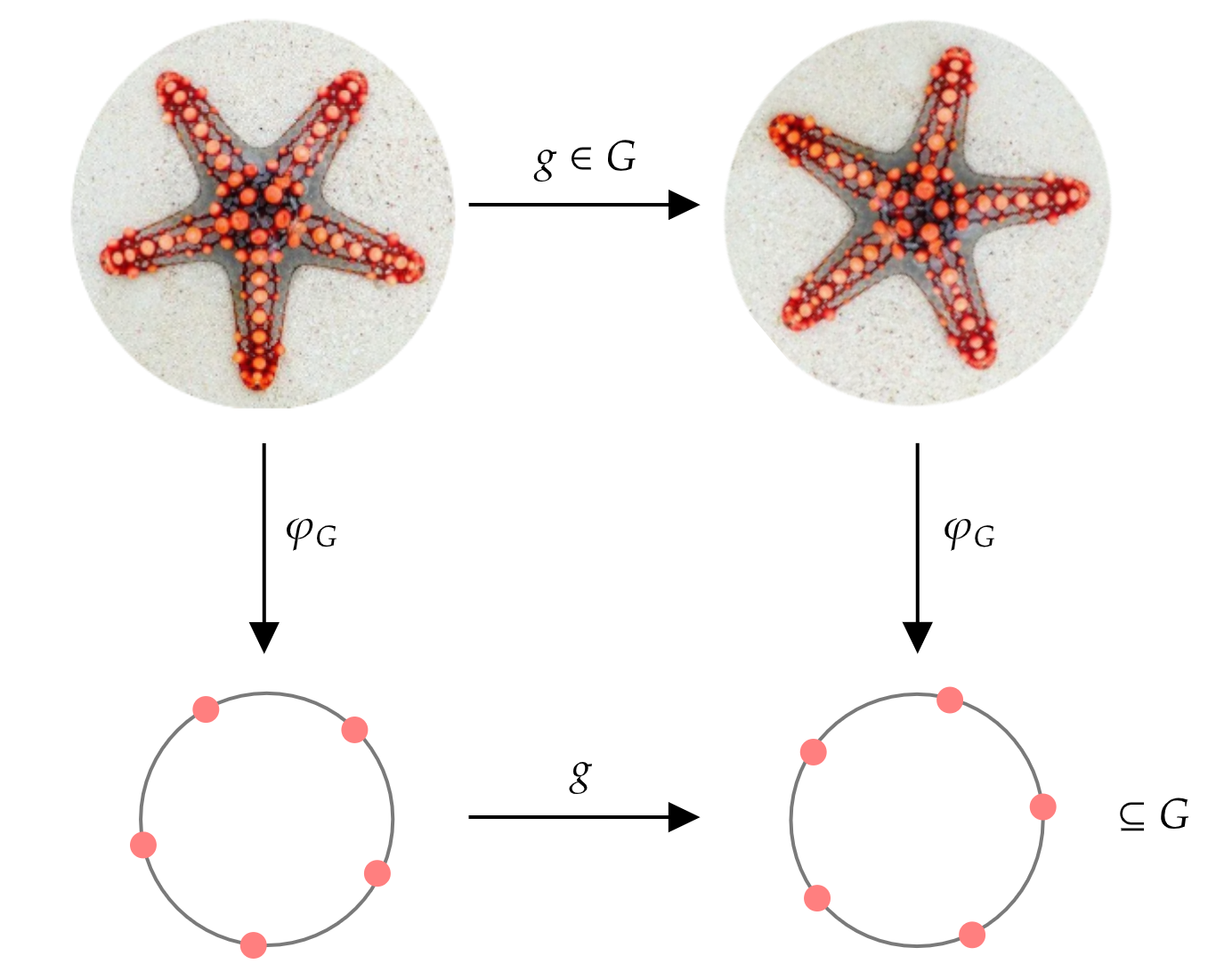}
  \end{center}
 \caption{An illustration of \methodname \ encoding data equivariantly as subsets of the symmetry group $G$. This results in representations that are suitable even when the action by $G$ on data is not free.  }
\end{figure}

\section{Equivariant Isomorphic Networks (\methodname)}
Our goal is to design an equivariant representation learner based on Theorem \ref{thm:orbstab}. We aim to train a model 
\begin{equation}
\varphi: \ \mathcal{X} \rightarrow \mathcal{Z}
\end{equation}
with a latent space $\mathcal{Z}$ on a loss encouraging equivariance.  The ideal choice for $\mathcal{Z}$ is given by $\coprod_{O \in \mathcal{X} / G } G / G_O$ since the latter is isomorphic to $\mathcal{X}$ (Theorem \ref{thm:orbstab}). In other words, $\varphi$ ideally outputs cosets of stabilizers of the input datapoints. However, while we assume that $G$ is known a priori, its action on $\mathcal{X}$ is not and has to be inferred from data. Since the stabilizers depend on the group action, they are unknown a priori as well. In order to circumvent the modeling of stabilizers and their cosets, we rely on the following simple result: 
\begin{proposition}\label{elementarprop}
Let $\varphi: \ \mathcal{X} \rightarrow 2^G$ be an equivariant map. Then for each $x\in \mathcal{X}$ belonging to an orbit $O$, $\varphi(x)$ contains a coset of (a conjugate of) $G_O$. 
\end{proposition}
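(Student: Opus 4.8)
The plan is to extract the claimed coset directly from the defining property of equivariance, applied to the symmetries that stabilize $x$. The intuition is that whenever a group element fixes $x$, equivariance forces $\varphi(x)$ to be invariant under the corresponding left multiplication on $2^G$; a nonempty subset of $G$ invariant under left multiplication by a subgroup must then contain an entire orbit of that subgroup's action, which is precisely a coset.

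First I would fix $x \in \mathcal{X}$ in the orbit $O$ and write $G_x$ for its stabilizer, which is a representative of the conjugacy class $G_O$. Assuming $\varphi(x) \neq \emptyset$ (otherwise equivariance sends the whole orbit to $\emptyset$ and there is nothing to prove), I would pick some $a \in \varphi(x)$. Next, for an arbitrary $h \in G_x$ I would use $h \cdot x = x$ together with equivariance to obtain
\begin{equation}
h \cdot \varphi(x) = \varphi(h \cdot x) = \varphi(x),
\end{equation}
so that $\varphi(x)$ is fixed by left multiplication by every element of $G_x$. In particular $ha \in \varphi(x)$ for all $h \in G_x$, which yields the inclusion $G_x a \subseteq \varphi(x)$.

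The final step is to recognize the set $G_x a$ as a coset of a conjugate of $G_O$. Here the key identity is
\begin{equation}
G_x a = a\,(a^{-1} G_x a),
\end{equation}
which exhibits $G_x a$ as the left coset $aH$ of the subgroup $H = a^{-1} G_x a$. Since $H$ is conjugate to $G_x$, it belongs to the conjugacy class $G_O$, and therefore $\varphi(x)$ contains the left coset $a \cdot (a^{-1} G_x a)$ of a conjugate of $G_O$, as claimed.

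The argument is elementary, so I do not anticipate a genuine obstacle; the only subtlety worth flagging is that the natural object produced by left-multiplication invariance is the right coset $G_x a$, and it is the rewriting $G_x a = a(a^{-1} G_x a)$ that accounts for the parenthetical ``(a conjugate of)'' in the statement — without passing to the conjugate $a^{-1} G_x a$ one would not, in general, land on a genuine coset of $G_x$ itself. I would also remark that this recovers the ideal situation of Theorem \ref{thm:orbstab}, where the representation assigns to each point exactly one such coset rather than merely containing one.
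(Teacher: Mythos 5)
Your proposal is correct and follows essentially the same route as the paper: use equivariance at stabilizing elements to get $G_x a \subseteq \varphi(x)$ for $a \in \varphi(x)$, then rewrite $G_x a = a(a^{-1}G_x a)$ as a left coset of a conjugate of $G_x$ — this is precisely the paper's identity $G_xh = hh^{-1}G_xh$. Your explicit handling of the nonemptiness of $\varphi(x)$ and the remark on why the conjugate is needed are minor clarifications, not a different argument.
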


\begin{proof}
Pick $x \in \mathcal{X}$. Then for every $g \in G_x$ it holds that $\varphi(x) = \varphi(g \cdot x) = g \cdot \varphi(x)$. In other words, $G_x h = h h^{-1} G_x h\subseteq  \varphi(x)$ for each $h \in \varphi(x)$. Since $h^{-1}G_xh$ is conjugate to $G_x$ the thesis follows.  
\end{proof}

Proposition \ref{elementarprop} enables $\varphi$ to output arbitrary subsets of $G$ instead of cosets of stabilizers. As long as those subsets are \emph{minimal} w.r.t. to inclusion, they will coincide with the desired cosets. \\

Based on this, we define the latent space of \methodname \ as $
\mathcal{Z} = \mathcal{Z}_G \times \mathcal{Z}_O$ and implement the map $\varphi$ as a pair of neural networks $\varphi_G : \mathcal{X} \rightarrow \mathcal{Z}_G$ and $\varphi_O : \mathcal{X} \rightarrow \mathcal{Z}_O$. The component $\mathcal{Z}_G$ represents cosets of stabilizers while $\mathcal{Z}_O$ represents orbits. Since the output space of a neural network is finite-dimensional, we assume that the stabilizers of the action are finite. The model $\varphi_G$ then outputs $N$ elements 
\begin{equation}\label{outphi}
\varphi_G(x) = \{ \varphi_G^1(x), \cdots, \varphi_G^N(x) \} \subseteq G 
\end{equation}
where $\varphi_G^i(x) \in G$ for all $i$. The hyperparameter $N$ should be ideally chosen larger than the cardinality of the stabilizers. On the other hand, the output of $\varphi_O$ consists of a vector of arbitrary dimensionality. The only requirement is that the output space of $\varphi_O$ should have enough capacity to contain the space of orbits $\mathcal{X} / G$. 

\subsection{Parametrizing $G$ via the Exponential Map}
The output space of usual machine learning models such as deep neural networks is Euclidean. However, $\varphi_G$ needs to output elements of the group $G$ (see Equation \ref{outphi}), which may be non-Euclidean as in the case of $G=\textnormal{SO}(n)$. Therefore, in order to implement $\varphi_G$, it is necessary to parametrize $G$. To this end, we assume that $G$ is a differentiable manifold, with differentiable composition and inversion maps, i.e., that $G$ is a \emph{Lie group}. One can then define the \emph{Lie algebra} $\mathfrak{g}$ of $G$ as the tangent space to $G$ at $1$.

We propose to rely on the \emph{exponential map} $\mathfrak{g} \rightarrow G$, denoted by $v \mapsto e^v$, to parametrize $G$. This means that $\varphi_G$ first outputs $N$ elements $ \varphi_G(x) = \{ v^1, \cdots, v^N \} \subseteq \mathfrak{g}$ that get subsequently mapped into $G$ as $\{e^{v^1}, \cdots, e^{v^N} \}$. Although the exponential map can be defined for general Lie groups by solving an appropriate ordinary differential equation, we focus on the case $G \subseteq \textnormal{GL}(n)$. The Lie algebra $\mathfrak{g}$ is then contained in the space of $n \times n$ matrices and the exponential map amounts to the matrix Taylor expansion 
\begin{equation}
e^v = \sum_{k \geq 0} \frac{v^k}{k!}
\end{equation}
where $v^k$ denotes the power of $v$ as a matrix. For specific groups, the latter can be simplified via simple closed formulas. For example, the exponential map of $\mathbb{R}^n$ is the identity while for $\textnormal{SO}(3)$ it can be efficiently computed via the Rodrigues' formula \cite{liang2018efficient}. 

\subsection{Training Objective}\label{sec:train}
As mentioned, our dataset $\mathcal{D}$ consists of samples from the unknown group action. This means that datapoints are triplets $(x,g,y) \in \mathcal{X} \times G \times \mathcal{X}$ with $y = g \cdot x$. Given a datapoint $(x,g,y) \in \mathcal{D}$ the learner $\varphi_G$ optimizes the equivariance loss over its parameters: 
\begin{equation}\label{eq:equivariance}
 \mathcal{L}_G(x,g,y) = d(g\cdot 
 \varphi_G(x), \ \varphi_G(y))
\end{equation}
where $d$ is a semi-metric for sets. We opt for the asymmetric \emph{Chamfer distance} 
\begin{equation}
d(A,B) = \frac{1}{|A|} \sum_{a \in A } \min_{b \in B} d_G(a,b)
\end{equation}
because of its differentiability properties. Any other differentiable distance between sets of points can be deployed as an alternative. Here $d_G$ is a metric on $G$ and is typically set as the squared Euclidean for $G=\mathbb{R}^n$ and as the squared Frobenius for $G = \textnormal{SO}(n)$. As previously discussed, we wish $\varphi_G(x)$, when seen as a set, to be minimal in cardinality. To this end, we add the following regularization term measuring the discrete entropy: 
\begin{equation}\label{eq:eqloss}
\widetilde{\mathcal{L}}_G(x) = \frac{\lambda}{N^2} \sum_{1 \leq i,j \leq N} d_G(\varphi^i_G(x), \ \varphi^j_G(x)) 
\end{equation}
where $\lambda$ is a weighting hyperparameter. On the other hand, since orbits are invariant to the group action $\varphi_O$ optimizes a \emph{contrastive loss}. We opt for the popular InfoNCE loss from the literature \cite{chen2020simple}:
\begin{equation}\label{eq:infonce}
    \mathcal{L}_O(x,y)  = d_O(\varphi_O(x), \ \varphi_O(y)) + \log \mathbb{E}_{x'}\left[e^{-d_O(\varphi_O(x'), \ \varphi_O(x))}\right]
\end{equation}
where $x'$ is marginalized from $\mathcal{D}$. As customary for the InfoNCE loss, we normalize the output of $\varphi_O$ and set $d_O(a,b) = - \cos(\angle{ab} ) = - a \cdot b$. The second summand of $\mathcal{L}_O$ encourages injectivity of $\varphi_O$ and as such prevents orbits from overlapping in the representation. 

The Orbit-Stabilizer Theorem (Theorem \ref{thm:orbstab}) guarantees that if \methodname \ is implemented with ideal learners $\varphi_G, \varphi_O$ then it infers isomorphic representations in the following sense. If the $\mathcal{L}_G(x,g,y)$ and the first summand of $\mathcal{L}_O(x,y)$ vanish for every $(x,g,y)$ then $\varphi$ is equivariant. If moreover the regularizations, $\widetilde{\mathcal{L}}_G$ and the second summand of $\mathcal{L}_O$, are at a minimum then $\varphi_G(x)$ coincides with a coset of $G_O$ for every $x \in O$ (Proposition \ref{elementarprop}) and $\varphi_O$ is injective. The second claim of Theorem \ref{thm:orbstab} implies then that the representation is isomorphic on its image, as desired. 

\section{Experiments}
We empirically investigate \methodname \ on image data acted upon by a variety Lie groups. Our aim is to show both qualitatively and quantitatively that \methodname \ reliably infers isomorphic equivariant representations for non-free group actions. 

We implement the neural networks $\varphi_G$ and $\varphi_O$ as a ResNet18 \cite{Sangeetha2006}. For a datapoint $x\in \mathcal{X}$, the network implements multiple heads to produce embeddings $\{ \varphi^1_G(x), \cdots , \varphi_G^N(x)\} \subseteq G$. The output dimension of $\varphi_O$ is set to $3$. We train the model for $50$ epochs using the AdamW optimizer \cite{Loshchilov2019DecoupledRegularization} with a learning rate of $10^{-4}$ and batches of 16 triplets $(x,g, y)\in \mathcal{D}$. 

\subsection{Datasets}\label{sec:datasets}
 We consider the following datasets consisting of $64 \times 64$ images subject to non-free group actions. Samples from these datasets are shown in Figure~\ref{fig:toruses}.
\begin{itemize}
\item \textsc{Rotating Arrows:} images of radial configurations of $\nu \in \{1,2,3, 4,5\}$ arrows rotated by $G = \textnormal{SO}(2)$. The number of arrows $\nu$ determines the orbit and the corresponding stabilizer is (isomorphic to) the cyclic group $C_\nu$ of cardinality $\nu$. The dataset contains 2500 triplets $(x,g,y)$ per orbit.

 \item \textsc{Colored Arrows:} images similar to \textsc{Rotating Arrows} but with the arrows of five different colors. This extra factor produces additional orbits with the same stabilizer subgroups. The number of orbits is therefore $25$. The dataset contains 2000 triplets per orbit. 
 
 \item \textsc{Double Arrows:} images of two radial configurations of $2, 3$ and $3,5$ arrows respectively rotated by the torus $G = \mathrm{SO}(2)\times \mathrm{SO}(2)$. The action produces two orbits with stabilizers given by products of cyclic groups: $C_2 \times C_3$ and $C_3 \times C_5$ respectively. The dataset contains 2000 triplets per orbit.

 \item \textsc{ModelNet:} images of monochromatic objects from ModelNet40 \cite{Wu_2015_CVPR} rotated by $G = \mathrm{SO}(2)$ along an axis. We consider five objects: an airplane, a chair, a lamp, a bathtub and a stool. Each object corresponds to a single orbit. The lamp, the stool and the chair have the cyclic group $C_4$ as stabilizer while the action over the airplane and the bathub is free. The dataset contains 2500 triplets per orbit.
 
 \item \textsc{Solids:} images of a monochromatic tetrahedron, cube and icosahedron \cite{implicitpdf2021} rotated by $G = \mathrm{SO}(3)$. Each solid defines an orbit, and the stabilizers of the tetrahedron, the cube, and the icosahedron are subgroups of order $12$, $24$ and $60$ respectively. The dataset contains 7500 triplets per orbit.
 \end{itemize}

\subsection{Comparisons}\label{sec:baselines}

 We compare \methodname \ with the following two equivariant representation learning models. 
\begin{itemize}
\item \textit{Baseline:} a model corresponding to \methodname \ with $N=1$ where $\varphi_G$ outputs a single element of $G$. The latent space is $\mathcal{Z} = G \times  \mathcal{Z}_O$, on which $G$ acts freely. We deploy this as the baseline since it has been proposed with minor variations in a number of previous works \cite{Caselles-Dupre2019,Painter2020, Marchetti2022EquivariantDecomposition, Tonnaer2022QuantifyingDisentanglement} assuming free group actions. 

\item \textit{Equivariant Neural Renderer (ENR):} a model from \cite{renderer} implementing a tensorial latent space $\mathcal{Z} = \mathbb{R}^{S^3}$, thought as a scalar signal space on a $S\times S \times S$ grid in $\mathbb{R}^3$. The group $\textnormal{SO}(3)$ act \emph{approximately} on $\mathcal{Z}$ by rotating the grid and interpolating the obtained values. The model is trained jointly with a decoder $\psi: \ \mathcal{Z} \rightarrow \mathcal{X}$ and optimizes a variation of the equivariance loss that incorporates reconstruction: $\mathbb{E}_{x,g, y=g\cdot x}[d_\mathcal{X}(y, \ \psi(g \cdot \varphi(x))   )]$ where $d_\mathcal{X}$ is the binary cross-entropy for normalized images. Although the action on $\mathcal{Z}$ is free, the latent discretization and consequent interpolation make the model only approximately equivariant.  
Similarly to \methodname , we implement ENR as ResNet18. As suggested in the original work \cite{renderer} we deploy $3$D convolutional layers around the latent and set to zero the latent dimensions outside a ball. We set $S = 8$ with 160 non-zero latent dimensions since this value is comparable to the latent dimensionality of \methodname, between 7 and 250 dimensions depending on $N$, making the comparison fair. Note that ENR is inapplicable to \textsc{Double Arrows} since its symmetry group is not naturally embedded into $\textnormal{SO}(3)$. 
\end{itemize}


 \subsection{Quantitative Results}

 In order to quantitatively compare the models, we rely on the following evaluation metrics computed on a test dataset $\mathcal{D}_{\textnormal{test}}$ consisting of $10 \%$ of the corresponding training data: 
  \begin{figure*}[b!]
\centering
 \includegraphics[width=.6\linewidth]{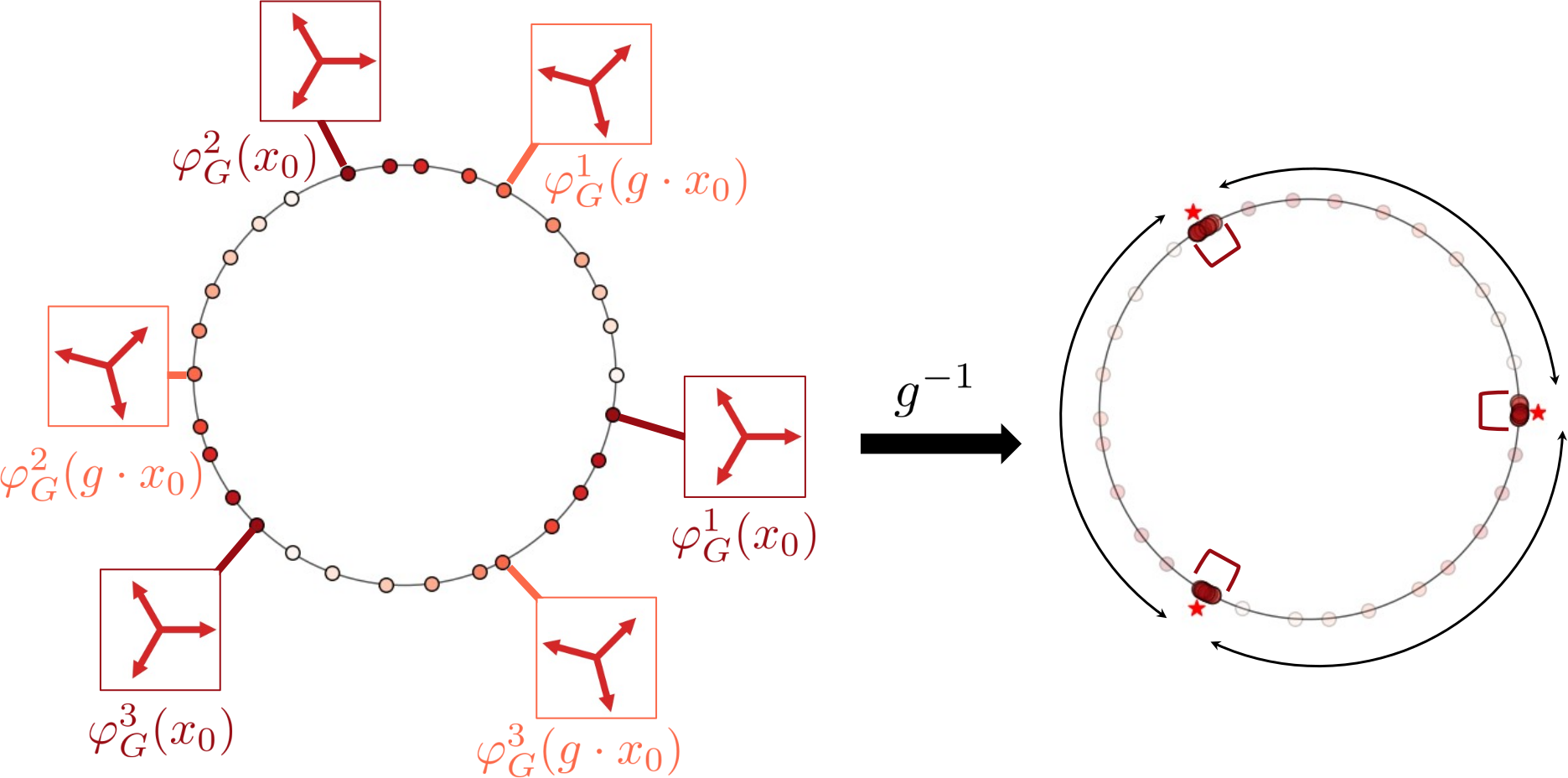}
\caption{Diagram explaining the estimation of the disentanglement metric for \methodname{}. This example assumes that $G = \mathrm{SO}(2)$ and that $A$ is the identity.}
\label{fig:metric}
\end{figure*}

\begin{itemize}
\item \textit{Hit-Rate:} a standard score comparing equivariant representations with different latent space geometries \cite{kipf2019contrastive}. Given a test triple $(x, g, y=g\cdot x) \in \mathcal{D}_{\textnormal{test}}$, we say that `$x$ hits $y$' if $\varphi(y)$ is the nearest neighbor in $\mathcal{Z}$ of $g \cdot \varphi(x)$ among a random batch of encodings $\{ \varphi(x) \}_{x \in \mathcal{B}}$ with $|\mathcal{B}|=20$. The hit-rate is then defined as the number of times $x$ hits $y$ divided by the test set size. For each model, the nearest neighbor is computed with respect to the same latent metric $d$ as the one used for training. Higher values of the metric are better.

 \item \textit{Disentanglement:} an evaluation metric proposed in \cite{Tonnaer2022QuantifyingDisentanglement} to measure disentanglement according to the symmetry-based definition of \cite{Higgins2018}. This metric is designed for groups in the form $G = \mathrm{SO}(2)^T$ and therefore is inapplicable to the \textsc{Solids} dataset. Per orbit, the test set is organized into datapoints of the form $y = g\cdot x_0$ where $x_0$ is an arbitrary point in the given orbit. In order to compute the metric, the test dataset is encoded into $\mathcal{Z}$ via the given representation and then projected to $\mathbb{R}^{2T}$ via principal component analysis. Then for each independent copy of $\textnormal{SO}(2) \subseteq G$, a group action  on the corresponding copy of $\mathbb{R}^2$ is inferred by fitting parameters via a grid search. Finally, the metric computes the average dispersion of the transformed embeddings as the variance of $g^{-1} \cdot A \varphi_G(y)$. For \methodname{}, we propose a modified version  accounting for the fact that $\varphi_G$ produces multiple points in $G$ using the Chamfer distance $d$ and averaging the dispersion with respect to each transformed embedding, see Figure~\ref{fig:metric}.  The formula for computing the metric is given by:

\begin{equation}
    \mathbb{E}_{y,y'}[d(h^{-1} \cdot A \varphi_G(y'), \ g^{-1} \cdot A \varphi_G(y))]
\end{equation}
where $y = g\cdot x_0$ and $y' = h \cdot x_0$. Lower values of the metric are better. 
\end{itemize}

\begin{table*}[h!]
  \centering
  \caption{Mean and standard deviation of the metrics across five repetitions. The number juxtaposed to the name of \methodname{} indicates the cardinality $N$ of the output of $\varphi_G$.
  \hfill \break} 

  \label{tab:quantitative-results}
  {\begin{tabular}{p{0.2\textwidth}p{0.2\textwidth}p{0.25\textwidth}p{0.2\textwidth}}
  \toprule
      Dataset & Model & Disentanglement $( \downarrow)$& Hit-Rate $(\uparrow)$ \\
  \hline

    \multirow{4}{*}{\shortstack{\textsc{Rotating}\\ \textsc{Arrows}}}& Baseline & $1.582_{\pm 0.013}$ & $0.368_{\pm0.004}$\\

 & \methodname5 & $\boldsymbol{0.009}_{\pm 0.005}$& $0.880_{\pm0.021}$\\

 & \methodname10 & $0.092_{\pm 0.063}$& $0.857_{\pm0.050}$\\
& ENR & $0.077_{\pm 0.028}$& $\boldsymbol{0.918}_{\pm 0.009}$\\


 \hline
 \multirow{4}{*}{\shortstack{\textsc{Colored}\\ \textsc{Arrows}}}& Baseline & $1.574_{\pm 0.007}$& $0.430_{\pm 0.004}$\\

 & \methodname5 & $0.021_{\pm 0.015}$ & $0.930_{\pm 0.055}$\\

 & \methodname10 & $\boldsymbol{0.001}_{\pm 0.001}$& $\boldsymbol{0.976}_{\pm 0.005}$\\


 & ENR & $0.106_{\pm 0.032}$& $0.949_{\pm 0.018}$\\

\hline

\multirow{4}{*}{\shortstack{\textsc{Double}\\ \textsc{Arrows}}}& Baseline & $1.926_{\pm0.019}$ & $0.023_{\pm0.004}$\\

& \methodname6 & $0.028_{\pm0.006}$& $0.512_{\pm0.011}$\\

 & \methodname15 & $0.004_{\pm0.001}$ & $0.820_{\pm0.104}$\\

 & \methodname20 & $\boldsymbol{0.002}_{\pm0.001}$& $\boldsymbol{0.934}_{\pm0.020}$ \\

 \hline

     \multirow{4}{*}{\textsc{ModelNet}}& Baseline & $1.003_{\pm 0.228}$& $0.538_{\pm0.086 }$\\
  & \methodname4 & $0.012_{\pm0.022 }$& $0.917_{\pm0.074 }$ \\

 & \methodname10 & $\boldsymbol{0.003}_{\pm 0.001}$& $\boldsymbol{0.910}_{\pm0.011 }$ \\

  & ENR & $0.037_{\pm0.038 }$ & $0.817_{\pm0.085 }$ \\

\hline

 \multirow{6}{*}{\textsc{Solids}}& Baseline & -& $0.123_{\pm 0.007}$\\
& \methodname12 & -& $0.126_{\pm 0.004}$\\
 & \methodname24 & -& $0.139_{\pm 0.056}$\\
 & \methodname60 & -& $0.596_{\pm 0.106}$\\
 & \methodname80 & -& $\boldsymbol{0.795}_{\pm 0.230}$\\
 & ENR & -& $0.772_{\pm 0.095}$\\

  \bottomrule
      \end{tabular}}
\end{table*}

The results are summarized in Table \ref{tab:quantitative-results}. \methodname \ achieves significantly better scores than the baseline. The latter is unable to model the stabilizers in its latent space, leading to representations of poor quality and loss of information. ENR is instead competitive with \methodname. Its latent space suits non-free group actions since stabilizers can be modelled as signals over the latent three-dimensional grid. ENR achieves similar values of hit-rate compared to \methodname. The latter generally outperforms ENR, especially on the \textsc{ModelNet} dataset, while is outpermformed on \textsc{Rotating Arrows}. According to the disentanglement metric, \methodname{} achieves significantly lower scores than ENR. This is probably due to the fact the latent group action in ENR is approximate, making the model unable to infer representations that are equivariant at a granular scale.


\begin{figure*}[h!]
\centering
 \includegraphics[width=.15\linewidth]{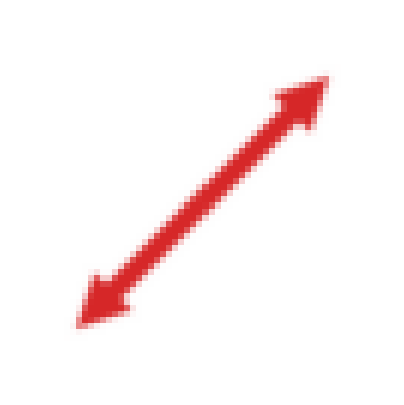}
 \includegraphics[width=.15\linewidth]{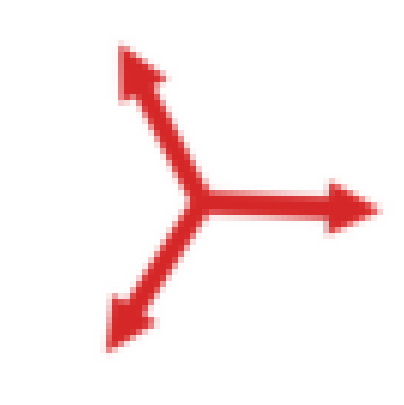}
 \includegraphics[width=.15\linewidth]{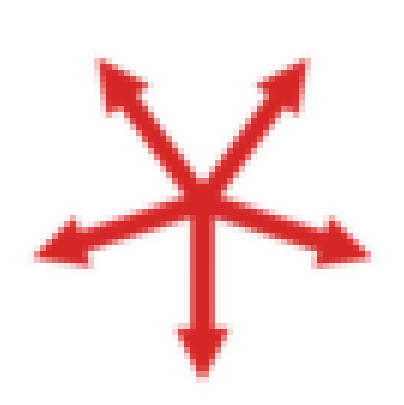}
 \includegraphics[width=.15\linewidth]{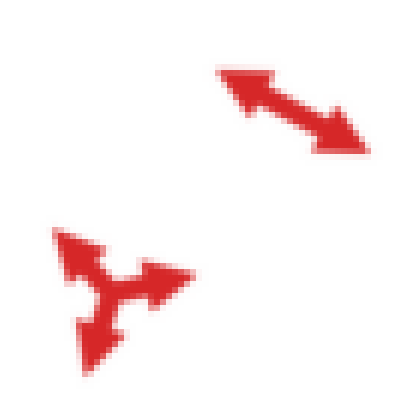}
  \includegraphics[width=.15\linewidth]{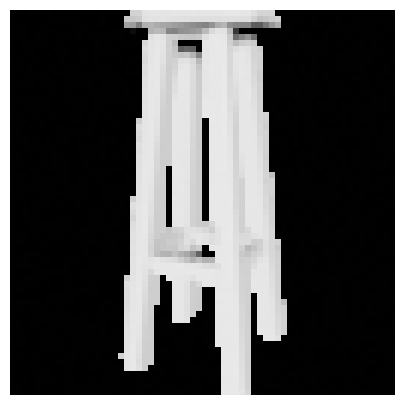}
  \includegraphics[width=.15\linewidth]{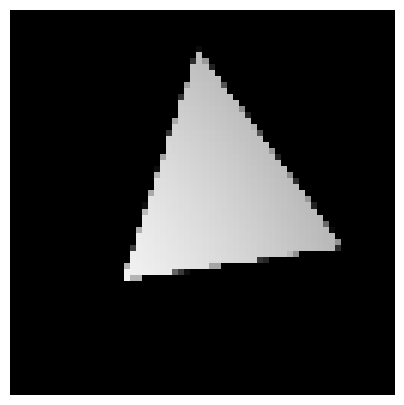}

 \includegraphics[width=.15\linewidth]{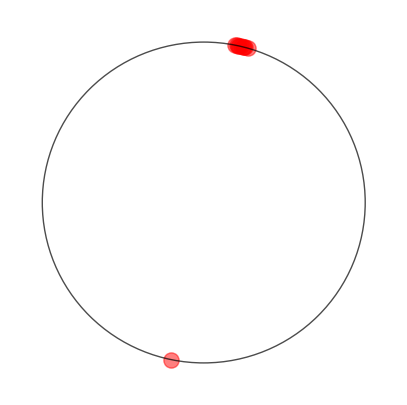}
 \includegraphics[width=.15\linewidth]{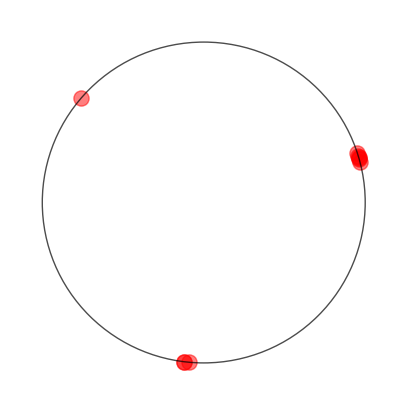}
 \includegraphics[width=.15\linewidth]{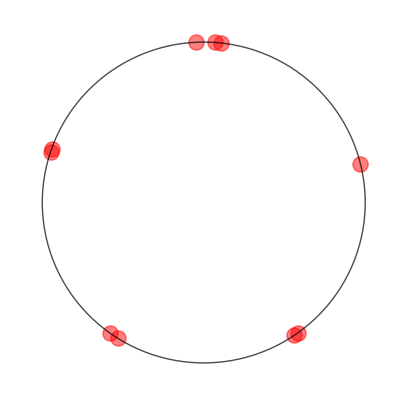}
 \includegraphics[width=.15\linewidth]{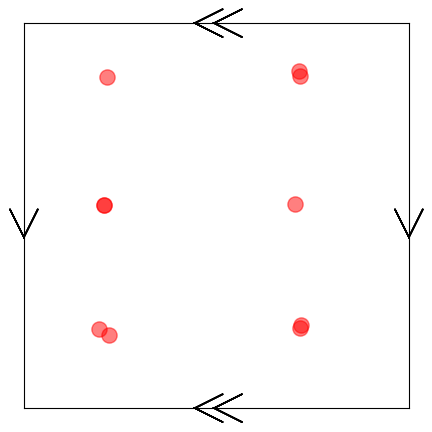}
 \includegraphics[width=.15\linewidth]{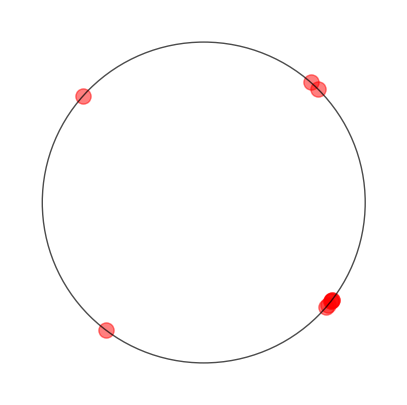}
 \includegraphics[width=.16\linewidth]{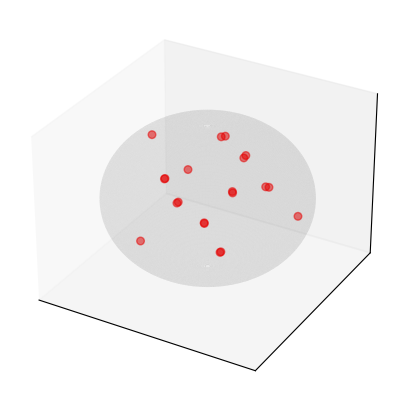}

\begin{picture}(0,0)
    \put(-173,85){\rotatebox{90}{\scriptsize $x \in \mathcal{X}$}}
    \put(-173,20){\rotatebox{90}{\scriptsize $\varphi_G(x) \subseteq G$}}
\end{picture}
\caption{Visualization of datapoints $x$ and the corresponding predicted (coset of the) stabilizer $\varphi_G(x)$. For \textsc{Double Arrows}, the torus $G = \textnormal{SO}(2) \times \textnormal{SO}(2)$ is visualized as an identified square. For the tetrahedron from \textsc{Solids}, $G$ is visualized as a projective space $\mathbb{RP}^3 \simeq \textnormal{SO}(3)$.}
\label{fig:toruses}
\end{figure*}

 \subsection{Qualitative Results}

\begin{figure*}[h!]
\centering
 \includegraphics[width=.35\linewidth]{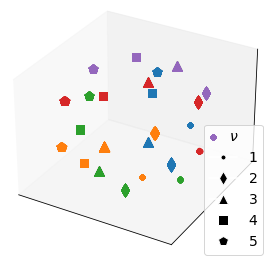} \hspace{6em}
 \includegraphics[width=.3\linewidth]{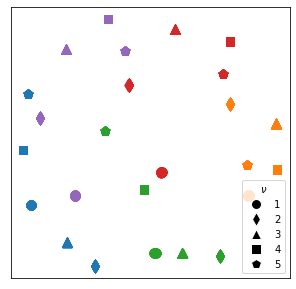}
\caption{Embeddings $\varphi_O(x) \in \mathcal{Z}_O \subseteq  \mathbb{R}^3$ for $x$ in \textsc{Colored Arrows}. Each symbol represents the ground-truth cardinality $\nu = |G_x|$ of the stabilizer while the color of the symbol represents the corresponding color of the arrow (left). The same embeddings are projected onto $\mathbb{R}^2$ via principal component analysis (right).}
\label{fig:orbits}
\end{figure*}

We provide a number of visualizations as a qualitative evaluation of \methodname. Figure \ref{fig:toruses} illustrates the output of $\varphi_G$ on the various datasets. As can be seen, \methodname \ correctly infers the stabilizers i.e., the cyclic subgroups of $\textnormal{SO}(2)$ and the subgroup of $\textnormal{SO}(3)$ of order $12$. When $N$ is larger than the ground-truth cardinalities of stabilizers, the points $\varphi_G^i$ are overlapped and collapse to the number of stabilizers as expected. Figure \ref{fig:orbits} displays the output of $\varphi_O$ for data from \textsc{Colored Arrows}. The orbits are correctly separated in $\mathcal{Z}_O$. Therefore, the model is able to distinguish data due to variability in the number $\nu$ of arrows as well as in their color.

\begin{figure*}[h!]
\centering
 \includegraphics[width=.43\linewidth]{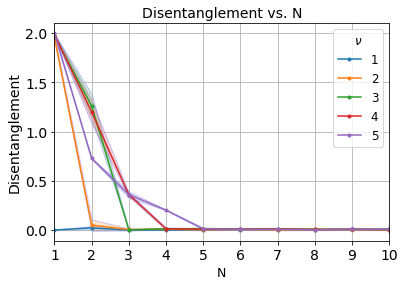}
 \includegraphics[width=.43\linewidth]{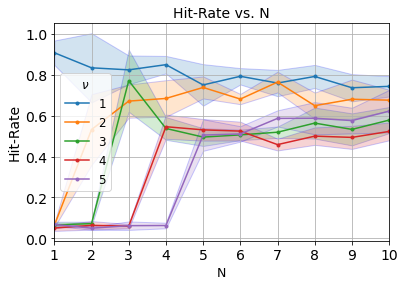}
 
\caption{Disentanglement and hit-rate for models trained with different values of $N$. Each line in the plot represents the results of a model trained on a dataset with a single orbit whose stabilizer has cardinality $\nu$. The plots show the mean and standard deviation across five repetitions.}
\label{fig:results_single_arrows}
\end{figure*}

\begin{figure*}[h!]
\centering
 \includegraphics[width=.43\linewidth]{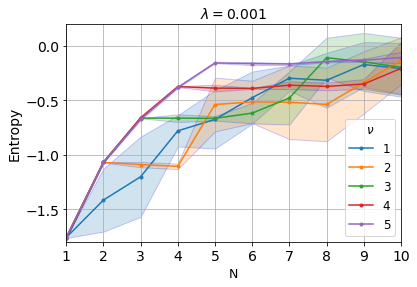}
 \includegraphics[width=.43\linewidth]{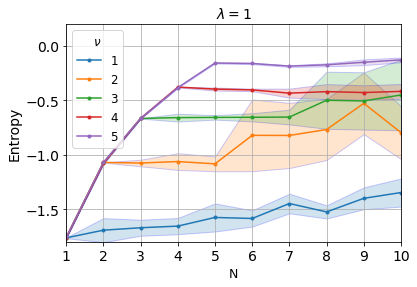}

 \includegraphics[width=.20\linewidth]{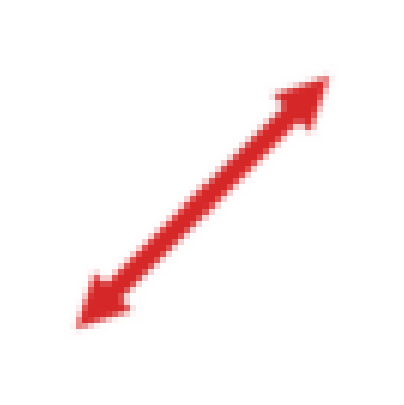}
   \hspace{2em}
 \includegraphics[width=.20\linewidth]{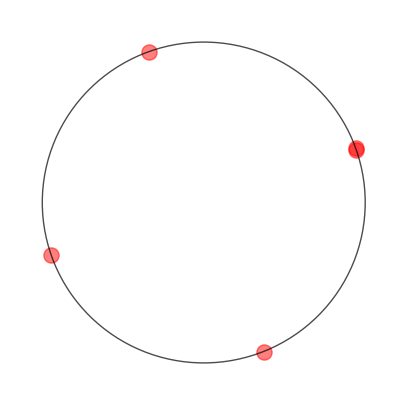}
 \includegraphics[width=.20\linewidth]{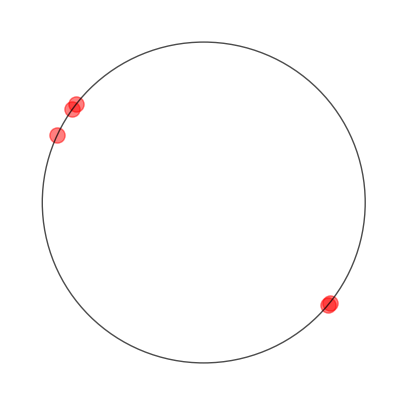}
  \includegraphics[width=.20\linewidth]{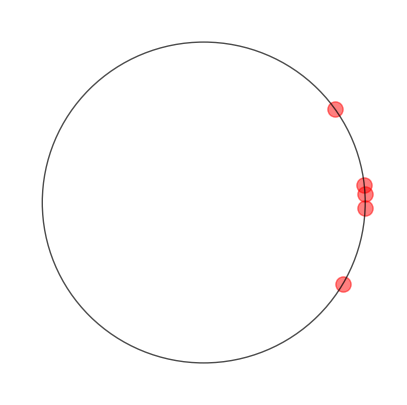}
 
\begin{picture}(0,0)
    \put(-47,5){\rotatebox{0}{\footnotesize $\lambda=0.001$}}
    \put(35,5){\rotatebox{0}{\footnotesize $\lambda=1$}}
    \put(107,5){\rotatebox{0}{\footnotesize $\lambda=10$}}
    \put(-115,5){\rotatebox{0}{\footnotesize $x$}}
    \put(-70,30){\rotatebox{90}{\footnotesize $\varphi_G(x)$}}
\end{picture}

\caption{Discrete entropy for models trained on the arrows dataset with different cardinalities of stabilizer $\nu$ and two distinct values of $\lambda$ (top row). Example embeddings $\varphi_G(x)$ obtained for a datapoint $x$ with two stabilizers obtained with models using $\lambda \in \{0.001, 1, 10\}$ (bottom row).}
\label{fig:results_equiv_lambda}
\end{figure*}

\begin{figure*}[h!]
\centering
 \includegraphics[width=.43\linewidth]{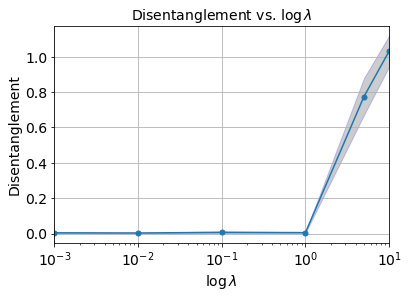}
\includegraphics[width=.43\linewidth]{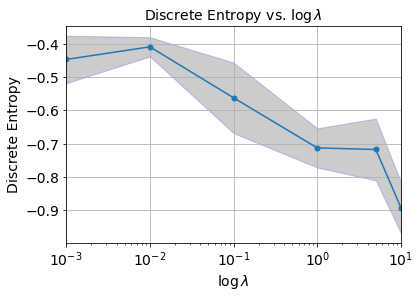}

 \includegraphics[width=.43\linewidth]{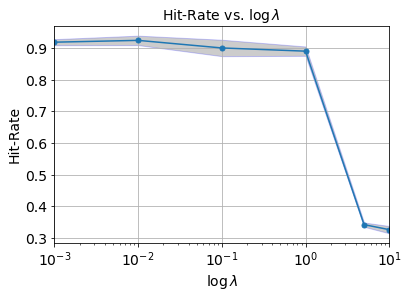}

\caption{Disentanglement, discrete entropy and hit-rate for models trained with different values of $\lambda$ and fixed $N=5$. The training dataset corresponds to the rotating arrows with $\nu \in \{1, 2, 3, 4, 5\}$. Each line shows the mean and standard deviation across five repetitions.}
\label{fig:results_lambdas}
\end{figure*}

\subsection{Hyperparameter Analysis}
For our last experiment, we investigate the effects of the hyperparameters $N$ and $\lambda$ when training \methodname{} on datasets with different numbers of stabilizers. 

First, we show that a value of $N$ larger than the cardinality of the stabilizers is necessary to achieve good values of disentanglement, and hit-rate for datasets with non-free group action, see Figure~\ref{fig:results_single_arrows}. However, large values of $N$ can result in non-collapsing embeddings $\varphi_G$ corresponding to non-minimal cosets of the stabilizers. In these cases, the regularization term of Equation \ref{eq:eqloss} and its corresponding weight $\lambda$ plays an important role. 

The bottom row of Figure~\ref{fig:results_equiv_lambda} shows the embeddings $\varphi_G(x)$ learnt for a datapoint $x\in \mathcal{X}$ with stabilizer $G_x \simeq C_2$ of cardinality two. The plots show how for low values of $\lambda$, the network converges to a non-minimal set. When an optimal value is chosen, such as $\lambda=1$, the embeddings obtained with $\varphi_G$ collapse to a set with the same cardinality as the stabilizers. If $\lambda$ is too large, the embeddings tend to degenerate and collapse to a single point. 

If the value of $\lambda$ is too small, the discrete entropy of the learnt embeddings is not restricted. It continues to increase even if the number of embeddings matches the correct number of stabilizers. When an appropriate value of $\lambda$ is chosen, the entropy becomes more stable as the embeddings have converged to the correct cardinality.

The plots in Figure~\ref{fig:results_lambdas} show the inverse relationship between $\lambda$ and the entropy of the encoder $\varphi_G$ that describes the collapse of the embeddings. The collapse of the embeddings also results in a lower performance of disentanglement and hit-rate by the models as seen for higher values of $\lambda>1$. Throughout the experiments, we fix the value of $\lambda = 1$ except for \textsc{Solids} where a value of $\lambda=10$ was chosen since the number $N$ used is larger.

\section{Conclusions and Future Work}
In this work, we introduced \methodname,  a method for learning equivariant representations for possibly non-free group actions. We discussed the theoretical foundations and empirically investigated the method on images with rotational symmetries. We showed that our model can capture the cosets of the group stabilizers and separate the information characterizing multiple orbits. 

\methodname \  relies on the assumption that the stabilizers of the group action are finite. However, non-discrete stabilizer subgroups sometimes occur in practice, e.g., in continuous symmetrical objects such as cones, cylinders or spheres. Therefore, an interesting future direction is designing an equivariant representation learner suitable for group actions with non-discrete stabilizers. 

\section*{Acknowledgements}
This work was supported by the Swedish Research Council, the Knut and Alice Wallenberg Foundation and the European Research Council (ERC-BIRD-884807). This work has also received funding from the NWO-TTW Programme “Efficient Deep Learning” (EDL) P16-25.

\section*{Ethical Statement}
The work presented in this paper consists of a theoretical and practical analysis on learning representations that capture the information about symmetry transformations observed in data. Due to the nature of this work as fundamental research, it is challenging to determine any direct adverse ethical implications that might arise. However, we think that any possible ethical implications of these ideas would be a consequence of the possible applications to augmented reality, object recognition, or reinforcement learning among others. The datasets used in this work consist of procedurally generated images with no personal or sensitive information.

\bibliographystyle{splncs04}
\bibliography{main}

\end{document}